%% file: main.tex
\documentclass[letterpaper, 10 pt, conference]{ieeeconf}  

\IEEEoverridecommandlockouts                              
\usepackage{amsmath, amssymb, amsfonts} 
\usepackage{graphicx, bm, cite}         
\usepackage{algorithm, algorithmic}    
\usepackage{xcolor}                    
\usepackage{soul}
\newtheorem{theorem}{Theorem}

\newtheorem{proposition}{Proposition}

\newtheorem{definition}{Definition}
\newtheorem{remark}{Remark}
\newtheorem{assumption}{Assumption}

\newcommand{\E}{\mathbb{E}}
\newcommand{\R}{\mathbb{R}}

\newcommand{\A}{\mathcal{A}}
\newcommand{\K}{\mathcal{K}}
\newcommand{\W}{\mathcal{W}}
\newcommand{\Z}{\mathcal{Z}}

\newcommand{\bx}{\mathbf{x}}
\newcommand{\bu}{\mathbf{u}}
\newcommand{\bw}{\mathbf{w}}

\newcommand{\genDstro}{\mathcal{D}}
\newcommand{\advDstro}{{\mathcal{D}'}}
\newcommand{\empRisk}{\widehat{R}}

\newcommand{\gaussian}{\mathcal{N}}
\newcommand{\dfn}{\mathrel{:}=}

\newcommand{\cmplx}{\mathcal{C}_\delta^n}

\renewcommand{\Re}{\mathbb{R}}

\DeclareMathOperator*{\minimize}{minimize} 		

\title{\LARGE \bf
Distributionally Robust PAC-Bayesian Control
}

\author{Domagoj Herceg and Duarte Antunes
\thanks{$^{1}$The authors are with the Department of Mechanical Engineering,
        Eindhoven university of Technology, Eindhoven, The Netherlands. \newline
        {email: \tt\small \{d.herceg,d.antunes\}@tue.nl}
        }%
}

\begin{document}
\maketitle
\thispagestyle{empty}
\pagestyle{empty}


\begin{abstract}
We present a distributionally robust PAC-Bayesian framework for certifying the performance of learning-based 
finite-horizon controllers. While existing PAC-Bayes control literature 
typically assumes bounded losses and matching training and deployment distributions, we explicitly address unbounded losses 
and environmental distribution shifts (the sim-to-real gap). We achieve this by drawing on two modern lines 
of research, namely the PAC-Bayes generalization theory and distributionally robust optimization via the type-1 Wasserstein distance. 
By leveraging  the System Level Synthesis (SLS) reparametrization, we derive 
a sub-Gaussian loss proxy and a bound on the performance loss due to distribution shift.
Both are tied directly to the operator norm of the closed-loop map. 
For linear time-invariant systems, this yields a computationally tractable optimization-based framework 
together with high-probability safety certificates for deployment in real-world environments 
that differ from those used in training.
\end{abstract}

\input{text/intro}
\input{text/background}

\input{text/pb2}
\input{text/sls_new}
\input{text/numerical_double_integrator}

\section{Conclusion and Future Work}
In this work, we combined PAC-Bayesian generalization analysis with Wasserstein distributional 
robustness and specialized the resulting framework to finite-horizon LTI control in SLS form. 
The key outcome is an explicit, tractable bound in which both concentration and robustness are 
certified through operator norms of the weighted closed-loop map. 
This yields finite-sample guarantees for randomized feasible controllers under disturbance-distribution shift. 
For future work, we envision extending this approach to model uncertainty, sub-exponential 
losses such as squared norm costs, and 
to learning-based robust model predictive control.

\bibliographystyle{IEEEtran}
\bibliography{dropacb}

\end{document}

%% file: text/intro.tex
\section{INTRODUCTION}
The integration of machine learning into control theory has provided powerful tools for 
synthesizing control policies directly from finite, 
noisy datasets. However, learning-based controllers are notoriously vulnerable to distribution shifts, as they usually assume that 
the data generating distribution used for training  matches the deployment (testing) distribution. This discrepancy, 
also referred to as the sim-to-real gap, can substantially degrade performance when a controller trained under nominal conditions 
is deployed in the real-world, where unmodeled disturbances can act as adversarial perturbations.
Compounding the issue is the finite-sample uncertainty that arises from limited training data.

\par
The PAC-Bayesian~\cite{Catoni2007} framework has emerged as a rigorous method to provide high-probability 
finite-sample generalization guarantees for \emph{randomized} learning algorithms. 
Using PAC-Bayes techniques, the authors in~\cite{DziugaiteR17} have provided the first non-vacuous generalization 
bounds in deep neural networks, which was a major breakthrough and a showcase of the potential of PAC-Bayes methods.

Researchers have also applied PAC-Bayesian methods in control, such as in settings that require generalization 
to unseen environments in robotics~\cite{Majumdar_PACB_generalize_novel_env}. In their extended work,  
they also treat the case of mismatch between training and testing distributions, but only use 
information-theoretic measures of $f$-divergences, effectively inflating the complexity term. 
In addition to assuming a hard cap on the losses, the penalty due to distributional robustness is a simple additive 
constant term that does not depend on the control policy.
More recently, authors in~\cite{BeroujeniPACSNOC,boroujeni2025pac_new} have explored PAC-Bayes 
guarantees in non-linear control by parameterizing the underlying system with an inherently stable parameterization in 
order to optimize and certify the resulting performance. However, as hinted before, these works make a 
standard assumption of a single data generating distribution in training and deployment environments and consider bounded losses.

Distributionally robust (DR) optimization~\cite{shapiro2021lectures} methods have had a much more notable presence in control~\cite{van2015distributionally}, being used in many applications such as model predictive control~\cite{myRAMPC}.
DR approaches have emerged as the principal way to combat the \text{optimizer's curse}~\cite{smith2006optimizers_curse}, 
a phenomenon closely related to overfitting. The main idea is to optimize the system's performance according to 
the worst distribution in an \emph{ambiguity set} centered around a nominal distribution~\cite{OJMO_2022__3__A4_0}. This nominal distribution is often 
an empirical one constructed by drawing a finite number of samples. By resorting to Wasserstein distance, the authors in~\cite{EsfahaniKuhn2018} provide probabilistic guarantees that the true distribution lies in an ambiguity set of the empirical one.

\par 
In this work, we extend the PAC-Bayes \emph{control} framework to handle unbounded losses and distribution shifts. 
In fact, we propose a distributionally robust PAC-Bayesian control framework utilizing the 1-Wasserstein distance.
By leveraging the Kantorovich-Rubinstein duality~\cite{villani2009optimal}, the Wasserstein robustness penalty 
explicitly ties the generalization bound to the controller-dependent Lipschitz constant of the closed-loop system. 
This bound allows us to immunize the system performance against distributional shifts in the controller design phase. 
In fact, our approach establishes a robust PAC-Bayesian bound that explicitly accounts for potential misalignment 
between training and deployment environments. 

Finally, we translate this theoretical distributionally robust PAC-Bayesian bound into a computationally tractable 
algorithm for LTI systems via System Level Synthesis (SLS)~\cite{wang2019_SLS}. This amounts to an effective reparameterization
that enables us to provide concrete bounds on the Lipschitz and sub-Gaussian proxies, both of which depend on the controller 
in our framework. We are now ready to summarize our contributions.

\textbf{Contributions:}
Our main contributions are twofold: first, we introduce a Wasserstein distributionally robust extension for 
PAC-Bayesian control for unbounded Lipschitz loss functions; second, by  specializing the framework to 
finite-horizon LTI control in SLS form, we derive 
explicit controller-dependent certificates for both loss concentration and 
deployment robustness from the same closed-loop map. These certificates lead to a tractable  
posterior optimization problem over feasible controllers and finite-sample guarantees under disturbance shift 
with respect to the training distribution, which does not need to be identified.

\par The remainder of the paper is organized as follows: In Section~\ref{sec:formulation}, we provide the reader with 
the necessary background on the dynamical system we consider, distributional robustness, and we introduce the PAC-Bayes framework. 
In Section~\ref{sec:DRPACBayes}, we state our result on distributionally robust PAC-Bayesian control for 
Wasserstein type-1 distance. Then, in Section~\ref{sec:sls}, we reframe the problem in the SLS framework and specialize our results to LTI systems. We provide valid proxies for both sub-Gaussian concentration and the robustness penalty based on the 
closed-loop map induced by the controller.
In Section~\ref{sec:numerical}, we verify our findings with a numerical example.

\textbf{Notation:} Let $\mathbb{R}^n$ be an $n$ dimensional vector of reals and $\Re^{n \times m}$ 
a matrix of reals for $n,m \in \mathbb{N}.$ For a matrix $M \in \Re^{n \times m}$, the operator
$\operatorname{vec}(M)$ denotes the column-stacking vectorization, i.e., $\operatorname{vec}(M) \in \mathbb{R}^{nm}$.
A cone of positive (semi-)definite matrices
of size $n \times n$ is denoted by $(\mathbb{S}^n_{+})$ $\mathbb{S}^n_{++}$. 
Similarly, let $\mathbb{R}_{+}^n$ be a vector of non-negative reals.
Let $\|\cdot\|$ be a norm on $\Re^n$; unless otherwise stated, this is the Euclidean norm.
A function $f:\mathbb{R}^n \rightarrow \mathbb{R}$ is L-Lipschitz if 
$\| f(x') - f(x)\| \le L \|x' - x\|$ for some $L>0$ and all $x',x$.
The operator norm of the matrix $M \in \Re^{n \times m}$ is defined as 
$\|M\|_{\text{op}} = \sup_{\|w\| = 1} \|Mw\|$ with $w \in \Re^{m}$.
The expectation operator is denoted by $\mathbb{E}$. When needed, 
we indicate the distribution in the subscript, e.g., $\mathbb{E}_{K \sim Q}[f(K)]$
denotes expectation w.r.t. $Q$.

%% file: text/background.tex
\section{Problem formulation}
\label{sec:formulation}
In this section, we introduce the necessary background for our approach.
Our goal is to provide finite-sample guarantees for learning controllers based on PAC-Bayes 
learning theory and to address the distribution
shift between the learned controller in the training and deployment (testing) environments.
To this end, we set up an abstract learning control problem in Section~\ref{sec:general_control_sytems}. 
We introduce the necessary preliminaries in probability theory in Section~\ref{sec:probability}. 
In Section~\ref{sec:dro_wasserstein} we discuss distributional robustness and Wasserstein distance. 
In Section~\ref{sec:PACBayes}, we summarize a key result from~\cite{PAC-Bayes-Chernoff-Bounds-Unbounded-Losses} 
on PAC-Bayes generalization, which we aim to extend to control settings in a distributionally robust form. 
We provide a short problem statement in Section~\ref{sec:problem_statement}.

\subsection{System Dynamics and Control}
\label{sec:general_control_sytems}

We consider a general stochastic control learning setting with controller class
$\mathcal K$ and data space $\mathcal Z$. A hypothesis
$K \in \mathcal K$ represents a controller, while a sample
$Z \in \mathcal Z \subseteq \R^d$ represents a single \emph{data point} of an uncertainty realization affecting the rollout loss,
such as a disturbance trajectory, model uncertainty, etc.

The loss of controller $K$ on sample $Z$ is given by a measurable nonnegative \emph{rollout} loss
$$
\ell : \mathcal K \times \mathcal Z \to \mathbb R_+.
$$
Given a training sample
$$
S = (Z_1,\dots,Z_n) \sim \genDstro^n
$$
and a controller $K\in\mathcal K$, the corresponding population risk~\eqref{eq:population_risk} and
empirical risk~\eqref{eq:empirical_risk} are defined as
\begin{subequations} \label{eq:pac_risks_both}
\begin{align}
R(K):=\mathbb{E}_{Z\sim D}[\ell(K,Z)], \label{eq:population_risk}\\
\empRisk_S(K):=\frac{1}{n}\sum_{i=1}^n \ell(K,Z_i).\label{eq:empirical_risk}
\end{align}
\end{subequations}
In Section~\ref{sec:sls}, we specialize this abstract setup for
finite-horizon LTI systems in System Level Synthesis (SLS) form, where the
sample $Z$ becomes a disturbance trajectory and the rollout loss measures 
weighted closed-loop performance output.

\subsection{Probability}
\label{sec:probability}
The central object of interest in the PAC-Bayes bound is a sub-Gaussian random variable~\cite{boucheron2013concentration}.
\begin{definition}[Sub-Gaussian r.v.] 
A zero mean random variable $X$ is sub-Gaussian if 
\begin{align}
    \E \left[\exp \left( \lambda X \right) \right] \le \exp \left( \frac{\sigma^2 \lambda^2}{2}\right), \quad \text{for any } \lambda \in \R 
\end{align}
for a given constant $\sigma^2$ called the \emph{variance proxy}. Equivalently, such $X$ is also called $\sigma$-sub-Gaussian.
\label{def:sub-Gaussian} 
$\hfill \square$
\end{definition}
Notice that the definition above includes, among others, Gaussian variables where 
Definition~\ref{def:sub-Gaussian} holds with 
equality and almost surely bounded random variables  $a \le X \le b$, where $\sigma^2 = \tfrac{1}{4}(b-a)^2$.
Put another way, sub-Gaussian random variables have a moment generating function (MGF) 
that is uniformly bounded above by the
moment generating function of a Gaussian variable.
\begin{definition}[KL divergence] 
Let $Q,P$ be two probability distributions such that $Q \ll P$ (absolute continuity) 
is defined on a common measurable space $\mathcal{X}$, with densities $q(x),p(x)$. Then
\begin{align}
\label{eq:KL_definition}
\mathrm{KL}(Q \Vert P) 
&= \int_{x \in \mathcal{X}} q(x) \log \frac{q(x)}{p(x)} \, dx.
\end{align}
$\hfill \square$
\end{definition}

The definition for discrete random variables follows analogously.
Here we also note that the KL divergence between two Gaussian random vectors has a closed form solution.
\subsection{Distributional robustness and Wasserstein distance} 
\label{sec:dro_wasserstein}
To quantify the discrepancy between the training and deployment environments, we will employ the 
$1$-Wasserstein distance~\cite{villani2009optimal}. Let $\mathcal{P}(\Z)$ be the set of probability 
measures $P$ on the data space $\Z \subseteq \mathbb{R}^{d}$ with a finite first moment.

\begin{definition}[1-Wasserstein Distance]
For any $P, Q \in \mathcal{P}(\Z)$, the $1$-Wasserstein distance is defined as the minimum cost 
of transporting one distribution to another:
\begin{align}
W_1(P, Q) = \inf_{\pi \in \Pi(P, Q)} \int_{\Z \times \Z} \|z - z'\|_2 \, d\pi(z,z')
\end{align}
where $\Pi(P, Q)$ denotes the set of all joint distributions (couplings) on $\Z \times \Z$ with marginals $P$ and $Q$. $\hfill \square$
\end{definition}
Based on this metric, we define the \textit{Wasserstein ambiguity set} of radius $\rho$ 
centered at distribution $P$ as:
\begin{align}
\mathcal{A}^{W_1}(P) := \{ Q \in \mathcal{P}(\Z) : W_1(Q, P) \le \rho \}
\label{eq:w1_ball}
\end{align}
This set represents a family of distributions that are close to distribution $P$ in an optimal transport sense. 
By Kantorovich–Rubinstein duality\cite{kuhn2019wasserstein} 
for any L-Lipschitz function, $\ell:\Z \mapsto \Re$
\begin{align} 
\sup_{Q \in \mathcal{A}_\rho(P) } \mathbb{E}_{Z \sim Q}[\ell(Z)] \le \mathbb{E}_{P}[\ell(Z)] + \rho L
\end{align} 
The above form is also often referred to as distributionally robust~\cite{shapiro2021lectures} optimization.
However, in our case, the Lipschitz constant and the sub-Gaussian variance proxy 
will not be static objects, they will depend on the posterior controller distribution.

%% file: text/pb2.tex
\subsection{PAC-Bayesian Learning Theory}
\label{sec:PACBayes}
The PAC-Bayesian generalization theory~\cite{Catoni2007} provides high-probability 
generalization guarantees for a \emph{distribution} of hypotheses. This is an important 
distinction from other statistical methods. Instead of bounding the difference between the population and empirical risks~\eqref{eq:pac_risks_both},  
it bounds the $\E_{K \sim Q} [R(K)]$ in terms of $\E_{K \sim Q} [\empRisk_S(K)]$, where $Q$ is a \emph{distribution} over $\K$, 
along with an additional complexity term that we 
explain later on.
These are often called the \emph{population Gibbs} and \emph{empirical Gibbs} risks.
It is a distribution-free approach that assumes we can sample from an unknown distribution and 
provides generalization guarantees through finite sample reasoning.

The majority of PAC-Bayesian results yield a bound on the generalization performance with an assumed maximum cap on the loss.
This setup is natural in machine learning applications, as the loss is often bounded (like $0$-$1$ classification loss).
Translating this to control could be done by saturating the loss at a certain maximum value. However, this entails problems such as tedious calibration and an inevitable loss of sensitivity in the region close to the upper bound.

There are also results for unbounded sub-Gaussian losses~\cite{friendlyPACBayes} 
with an assumed global variance proxy $\sigma^2$ 
for the loss.
However, this is often not suitable for the control of dynamical systems, as  
bounding the worst case loss implies a bound inherently 
determined by the \emph{worst} possible controller among controllers. 

For this reason, we adopt a recent result for \emph{hypothesis dependent sub-Gaussian losses}~\cite{PAC-Bayes-Chernoff-Bounds-Unbounded-Losses}. Here, we briefly note that the approach in~\cite{PAC-Bayes-Chernoff-Bounds-Unbounded-Losses} has other relevant 
implications in terms of optimization, but this is out of the scope of this paper. In control parlance, a hypothesis corresponds to a controller. The meaning is that the concentration bound depends on the (expected) performance of the \emph{deployed} controller. As we seek to optimize the performance of the controller, these two objectives are aligned.
We state the theorem adapted to our setup.

\begin{theorem}[Thm. $11$, Cor. $13$ from~\cite{PAC-Bayes-Chernoff-Bounds-Unbounded-Losses}]
\label{thm:pacbayes_model_dependent_subgaussian_sqrt}
Let $\mathcal{K}$ be a controller (hypothesis) space, let $P$ be a \emph{data-independent} prior on
$\mathcal{K}$, and let
$
S=(Z_1,\dots,Z_n)\sim \genDstro^n,
$ for $n\ge 2$ be a dataset sampled i.i.d. from $\genDstro$.
For each $K\in\mathcal{K}$, let the population and the empirical risk be as in~\eqref{eq:pac_risks_both}.

Assume that for every fixed $K\in\mathcal{K}$, the centered loss
$ 
\widehat{\ell}(K,Z) \dfn \ell(K,Z)-R(K)
$
is $\sigma(K)$-sub-Gaussian, in the sense that
$$
\log \mathbb{E}_{Z\sim D}
\exp\!\left(\lambda(\widehat{\ell}(K,Z)\right)
\le
\frac{\lambda^2}{2}\sigma(K)^2
%
$$
for all $\lambda \in \Re$. 
Then, for any $\delta\in(0,1)$, with probability at least $1-\delta$ over the
draw of $S\sim D^n$, for all posteriors
$Q\ll P$ it holds that 
\begin{align}
\mathbb{E}_{K\sim Q}[R(K)]
\le
\mathbb{E}_{K\sim Q}[\empRisk_S(K)]
+
\cmplx(Q,P,\sigma)
\label{eq:sigma_K_dependend_PAC_Bayes}
\end{align} 
provided that 
$
\mathbb{E}_{K\sim Q}[\sigma(K)^2]<\infty. \hfill\square 
$
\end{theorem}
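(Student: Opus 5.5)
The plan is the standard Chernoff--Donsker--Varadhan route: a fixed-$\lambda$ PAC-Bayes bound, followed by a step that makes it uniform in $\lambda$. The sub-Gaussian assumption is only ever used pointwise in $K$, under the prior $P$. Since $P$ is data-independent, exchanging the order of integration is legitimate. The $K$-dependence of $\sigma(K)$ is therefore absorbed into the exponent before any change of measure takes place.

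\textbf{Step 1 (fixed $\lambda$).} Fix $K\in\K$ and $\lambda>0$. Because the $Z_i$ are i.i.d. and $\widehat{\ell}(K,Z)$ is $\sigma(K)$-sub-Gaussian, independence gives
\begin{align*}
\E_{S}\exp\!\big(\lambda n (R(K)-\empRisk_S(K))\big)\le \exp\!\big(\tfrac{n\lambda^2}{2}\sigma(K)^2\big).
\end{align*}
Define $f_\lambda(K,S)\dfn \lambda n(R(K)-\empRisk_S(K)) - \tfrac{n\lambda^2}{2}\sigma(K)^2$. Then $\E_S\E_{K\sim P} e^{f_\lambda}\le 1$ by Fubini. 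By Markov's inequality, with probability at least $1-\delta$ we have $\E_{K\sim P}e^{f_\lambda(K,S)}\le 1/\delta$. On that event, the Donsker--Varadhan variational formula, $\E_Q[f]\le \mathrm{KL}(Q\Vert P)+\log\E_P e^{f}$, holds simultaneously for all $Q\ll P$. It yields
\begin{align*}
\E_{Q}[R(K)]-\E_Q[\empRisk_S(K)] \le \frac{\mathrm{KL}(Q\Vert P)+\log(1/\delta)}{\lambda n}+\frac{\lambda}{2}\E_Q[\sigma(K)^2].
\end{align*}
The hypothesis $\E_Q[\sigma(K)^2]<\infty$ is what makes the right-hand side meaningful. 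It also lets us split the expectation of $f_\lambda$ linearly.

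\textbf{Step 2 (uniformity over $\lambda$; main obstacle).} The minimizing value is $\lambda^\star=\sqrt{2(\mathrm{KL}+\log(1/\delta))/(n\E_Q\sigma^2)}$, which depends on $Q$ and hence on the data. It cannot be fixed in advance, and this is where the real work lies.

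My first attempt would be a geometric grid $\lambda_k$, $k=1,\dots,m$, with $m$ of order $n$. I would apply Step 1 with confidence $\delta/m$ to each grid point and take a union bound. For a given $Q$, I would then pick the grid point within a constant factor of $\lambda^\star$, losing only a constant factor.

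Values of $\lambda^\star$ outside the grid range need a separate check. Because $\mathrm{KL}\ge 0$ and $\log(m/\delta)\ge\log 2$ for $n\ge 2$, one should be able to show that such cases either cannot occur for the chosen range or give a bound no better than the one claimed. An alternative I would keep in reserve is to mix over $\lambda$ with a data-free prior; this avoids the grid but leads to the same $\log(n/\delta)$-type term.

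\textbf{Step 3 (identify $\cmplx$).} The outcome is a bound of the form
\begin{align*}
\cmplx(Q,P,\sigma)= c\sqrt{\frac{2\,\E_{K\sim Q}[\sigma(K)^2]\,\big(\mathrm{KL}(Q\Vert P)+\log(n/\delta)\big)}{n-1}}
\end{align*}
up to the precise constants and the $n$ versus $n-1$ bookkeeping fixed by the grid choice in Cor.~13 of~\cite{PAC-Bayes-Chernoff-Bounds-Unbounded-Losses}. Since the event in Step 2 does not depend on $Q$, the bound holds for all posteriors $Q\ll P$ at once. This is exactly \eqref{eq:sigma_K_dependend_PAC_Bayes}.
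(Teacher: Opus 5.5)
Your Step 1 is fine, but Steps 2--3 do not prove the stated inequality, and the problem is not just bookkeeping.

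The claim has constant $1$ in front of the square root. A finite geometric grid of ratio $r$ with a union bound only guarantees a grid point at which $a/\lambda+b\lambda$ is within a factor $\tfrac12(\sqrt r+1/\sqrt r)>1$ of its minimum; your own Step 3 ends with an unspecified $c$.

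More seriously, $\lambda^\star=\sqrt{2(\mathrm{KL}(Q\Vert P)+\log(1/\delta))/(n\,\E_Q[\sigma(K)^2])}$ can lie anywhere in $(0,\infty)$. The reason is that $\sigma(K)$ is unbounded and $Q$-dependent, and the problem is scale-free: rescaling the loss rescales $\lambda^\star$. So no grid of $O(n)$ points fixed before seeing $S$ covers it. Your fallback, that out-of-range cases ``cannot occur or give a bound no better than the one claimed,'' is exactly what fails here. For an unbounded loss there is no regime in which the claimed inequality holds trivially.

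Cor.~13 of~\cite{PAC-Bayes-Chernoff-Bounds-Unbounded-Losses} also uses no grid. The point of that work is exact, posterior-dependent optimization of $\lambda$, and the $n-1$ and $\log(n/\delta)$ come from a different source.

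The missing idea is to take the supremum over $\lambda$ inside the exponent before the change of measure. With $g_K\dfn R(K)-\empRisk_S(K)$, set $X_K\dfn (g_K)_+^2/(2\sigma(K)^2)=\sup_{\lambda\ge 0}\{\lambda g_K-\lambda^2\sigma(K)^2/2\}$. Your moment-generating-function bound from Step 1 gives, via Chernoff, $\Pr_S(X_K\ge t)\le e^{-nt}$ for $t>0$. Hence
\begin{align*}
\E_S\bigl[e^{(n-1)X_K}\bigr]=1+\int_0^\infty (n-1)e^{(n-1)t}\Pr_S(X_K>t)\,dt\le 1+(n-1)\int_0^\infty e^{-t}\,dt=n .
\end{align*}
Fubini over the data-free $P$, Markov, and Donsker--Varadhan then give $(n-1)\E_Q[X_K]\le \mathrm{KL}(Q\Vert P)+\log(n/\delta)$, on one event of probability at least $1-\delta$ and for all $Q\ll P$. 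Cauchy--Schwarz, $\E_Q[g_K]\le\sqrt{\E_Q[\sigma(K)^2]\,\E_Q[(g_K)_+^2/\sigma(K)^2]}$, then yields exactly $\cmplx(Q,P,\sigma)$. Equivalently, any $\lambda$, even a $Q$-dependent one, can now be plugged in at no cost.

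Your reserve mixture idea can also be rescued, but only per $K$, inside the exponent, with a $\sigma(K)$-scaled mixing law. Integrating $\E_S\exp(\lambda n g_K-n\lambda^2\sigma(K)^2/2)\le 1$ against $\lambda\sim\mathcal{N}(0,\sigma(K)^{-2})$ gives $\E_S\exp\bigl(n^2g_K^2/(2(n+1)\sigma(K)^2)\bigr)\le\sqrt{n+1}$. After the same steps, this gives a bound at least as tight as the claimed one.
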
 
In~\eqref{eq:sigma_K_dependend_PAC_Bayes}, shorthand $\cmplx(Q,P,\sigma) $ is the complexity term
$$
\cmplx(Q,P,\sigma) =  \sqrt{
\frac{
2\,\mathbb{E}_{K\sim Q}[\sigma(K)^2]\,
\bigl(\mathrm{KL}(Q\|P)+\log(n/\delta)\bigr)
}{
n-1
}
}.
$$
Note that we sample from the distribution $\genDstro$ used for training and provide the deployment guarantees 
assuming that the data generating distribution in the real-world is the same as the training one. 
This is often an unrealistic assumption, and we will address this particular issue in Section~\ref{sec:DRPACBayes}.
\subsection{Problem statement}
\label{sec:problem_statement}
In this paper, we tackle the problem of ensuring that the control system trained on a finite-sample training environment, 
with an unknown data-generating  distribution, translates to predictable performance in the deployment environment under distribution shift.
In particular, we aim to extend Theorem~\ref{thm:pacbayes_model_dependent_subgaussian_sqrt} to the case where the deployment generating distribution is not the same as the training one.

\section{Distributionally robust PAC-Bayes}
\label{sec:DRPACBayes}
In this section, we combine the nominal PAC-Bayes bound of 
Theorem~\ref{thm:pacbayes_model_dependent_subgaussian_sqrt} with Wasserstein robustification. We further   
specialize the resulting guarantee to finite-horizon SLS control in Section~\ref{sec:sls}.
\par Departing from the standard PAC-Bayes, our aim is to provide a high-probability upper bound on the expected 
\emph{distributionally robust population} (DROP) risk,
which we define as:
\begin{align}
R_\rho(K) \dfn \sup_{\advDstro \in \A_\rho(\genDstro)} \mathbb{E}_{Z \sim \advDstro} [\ell(K, Z)]
\label{eq:DROP_risk}
\end{align}
We indicate the robust version with the $\rho$ in the subscript, which will be the robustness radius throughout the paper.
The main idea is to account for the possibility that the data seen in deployment will be generated 
from a different distribution $\advDstro$, thereby invalidating the assumption of Theorem~\ref{thm:pacbayes_model_dependent_subgaussian_sqrt}.
We treat the real-world distribution as adversarial but close, in 
some sense, to the training regime. The DROP risk stated above is for a generic ambiguity set 
$\A_\rho$ with radius $\rho$. Note that for $\rho = 0$, we recover the standard population risk from Theorem~\ref{thm:pacbayes_model_dependent_subgaussian_sqrt}.
In what follows, we specialize the ambiguity set to type-$1$ Wasserstein ambiguity 
and leave the more general case of arbitrary convex ambiguity sets for future work. 

\subsection{Distributionally Robust PAC-Bayes via Wasserstein Distance}
To derive a bound where the penalty reflects the specific vulnerability of each model, 
we shift our attention to the 1-Wasserstein distance. 
The Wasserstein ambiguity ball of radius $\rho \ge 0$ around the nominal distribution $\genDstro$ is given 
by $A_\rho(\genDstro)$ as defined in~\eqref{eq:w1_ball}.
We shall drop the ${W_1}$ superscript and, from now on, understand that we  are working with the Wasserstein distance 
when referring to ambiguity sets.

\begin{assumption}[Environment]
For any controller $K \in \K$, the loss function $Z \mapsto \ell(K, Z)$ is $L(K)$-Lipschitz 
continuous with respect to the Euclidean metric, 
that is, for all $Z, Z' \in \mathcal{Z}$:
\begin{align*}
|\ell(K, Z) - \ell(K, Z')| \le L(K) \|Z - Z' \|.
\end{align*}
$\hfill \square$
\label{ass:lip_loss}
\end{assumption}
Assumption~\ref{ass:lip_loss} is the regularity condition that enables the Wasserstein robustification in Theorem~\ref{thm:PACBayes-W1}.
We are now ready to state the theorem for the distributionally robust version of PAC-Bayes.
\begin{theorem}[Wasserstein DR-PAC-Bayes]
\label{thm:PACBayes-W1}
Let $\K$ be a hypothesis space, $P$ a data-independent prior, and $S = \{Z_i\}_{i=1}^n$ drawn i.i.d. from $\genDstro$. 
Assume that Assumption~\ref{ass:lip_loss} holds. In addition, for every fixed $K \in \K$, the centered nominal loss is $\sigma(K)$-sub-Gaussian under $\genDstro$.
For any $\delta \in (0, 1)$, with probability at least $1 - \delta$ over the draw of $S \sim \genDstro^n$, the following holds simultaneously for all posteriors $Q \ll P$:
$$
\mathbb{E}_{K \sim Q}  \left[R_\rho(K) \right]\le \mathbb{E}_{K \sim Q}\left[ \widehat{R}_{S, \rho}(K) \right] + \cmplx(Q,P,\sigma)
$$
where $\widehat{R}_{S, \rho}(K) \dfn \widehat{R}_S(K) + L(K)\rho$ and 
$R_\rho(K)$ is given by~\eqref{eq:DROP_risk} with respect to the ambiguity set~\eqref{eq:w1_ball}.
\end{theorem}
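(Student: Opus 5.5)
The plan is to combine two ingredients: a deterministic, pointwise bound from Kantorovich--Rubinstein duality, and the probabilistic bound of Theorem~\ref{thm:pacbayes_model_dependent_subgaussian_sqrt}. The two are joined by linearity and monotonicity of the expectation over $K\sim Q$.

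\textbf{Step 1 (pointwise robustification).} Fix any $K\in\K$. By Assumption~\ref{ass:lip_loss}, $Z\mapsto \ell(K,Z)$ is $L(K)$-Lipschitz. Take any $\advDstro\in\A_\rho(\genDstro)$ and any coupling $\pi\in\Pi(\advDstro,\genDstro)$. Then
\begin{align*}
\mathbb{E}_{\advDstro}[\ell(K,Z')]-\mathbb{E}_{\genDstro}[\ell(K,Z)]
&=\int \bigl(\ell(K,z')-\ell(K,z)\bigr)\,d\pi(z',z)\\
&\le L(K)\int \|z'-z\|\,d\pi(z',z).
\end{align*}
Taking the infimum over couplings gives the bound $L(K)\,W_1(\advDstro,\genDstro)\le L(K)\rho$. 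Taking the supremum over the ball then yields $R_\rho(K)\le R(K)+L(K)\rho$ for every $K$. Both expectations are finite, since elements of $\mathcal{P}(\Z)$ have finite first moments and a Lipschitz loss grows at most linearly. This step is deterministic and holds for every $K$ simultaneously; it involves neither the sample nor $Q$.

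\textbf{Step 2 (integrate over $Q$).} Since $R_\rho(K)\le R(K)+L(K)\rho$ pointwise, integrating against $Q$ gives
\[
\mathbb{E}_{K\sim Q}[R_\rho(K)]\le \mathbb{E}_{K\sim Q}[R(K)]+\rho\,\mathbb{E}_{K\sim Q}[L(K)].
\]
We need $K\mapsto R_\rho(K)$ to be measurable, or else we interpret the left side as an outer integral. A cleaner route is simply to note that the bound only requires the right-hand side to be well defined. If $\mathbb{E}_Q[L(K)]=\infty$, the claimed inequality is trivial, so we may assume it is finite.

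\textbf{Step 3 (apply the nominal PAC-Bayes bound).} The centered nominal loss is $\sigma(K)$-sub-Gaussian under $\genDstro$. Hence Theorem~\ref{thm:pacbayes_model_dependent_subgaussian_sqrt} gives, with probability at least $1-\delta$ over $S\sim\genDstro^n$ and simultaneously for all $Q\ll P$ with $\mathbb{E}_Q[\sigma(K)^2]<\infty$,
\[
\mathbb{E}_Q[R(K)]\le \mathbb{E}_Q[\empRisk_S(K)]+\cmplx(Q,P,\sigma).
\]
If $\mathbb{E}_Q[\sigma(K)^2]=\infty$, the complexity term is infinite and the claim again holds trivially. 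On this high-probability event, we substitute into Step 2. Linearity then gives $\mathbb{E}_Q[\empRisk_S(K)]+\rho\,\mathbb{E}_Q[L(K)]=\mathbb{E}_Q[\widehat R_{S,\rho}(K)]$, which is the claim.

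The key observation is that no extra union bound or change in $\delta$ is needed. The only random event is the one from Theorem~\ref{thm:pacbayes_model_dependent_subgaussian_sqrt}, and Step 1 is a sure inequality. Uniformity over $Q$ is therefore inherited directly.

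The main, though mild, obstacle is Step 1. We must justify the duality inequality with a controller-dependent constant and handle integrability and measurability issues. Chief among these is that the supremum defining $R_\rho(K)$ must be handled pointwise in $K$ before integrating over $Q$. This is why the argument goes through the coupling directly rather than through a global duality statement.
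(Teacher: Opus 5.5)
Your proposal is correct and follows the paper's proof essentially step for step: a pointwise bound $R_\rho(K)\le R(K)+L(K)\rho$, integration over $Q$, and substitution of Theorem~\ref{thm:pacbayes_model_dependent_subgaussian_sqrt} on its single $1-\delta$ event, with no union bound. The differences are cosmetic: you derive the pointwise bound directly from couplings (the easy direction of Kantorovich--Rubinstein) instead of citing the duality, and you add the integrability, measurability and infinite-value caveats that the paper leaves implicit.
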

\begin{proof}
The idea of the proof is as follows. First, we bound the performance degradation caused by the 
deployment distribution $\advDstro$ with respect to the training data-generating distribution $\genDstro$.
The training data-generating distribution is unknown, but we can sample from it in an i.i.d. fashion and use PAC-Bayes theory 
to provide performance bounds based on a finite-sample dataset. Chaining these two gives the result.

We begin by bounding the worst-case risk for a fixed hypothesis $K \in \K$. Since the loss $Z \mapsto \ell(K,Z)$ is
$L(K)$-Lipschitz by Assumption~\ref{ass:lip_loss}, using the Kantorovich-Rubinstein duality, 
the supremum of the expected loss over the 1-Wasserstein ball is upper bounded by the nominal 
expected loss plus a penalty proportional to the Lipschitz constant of the function. Hence for $K \in \K$
\begin{align}
\sup_{\advDstro \in \A_\rho(\genDstro)} \mathbb{E}_{Z \sim \advDstro} [\ell(K,Z)] \le R(K) + L(K)\rho.
\label{eq:proof_w1_rk}
\end{align}

Taking the expectation over the posterior distribution $Q$ on both sides
\begin{align}
\mathbb{E}_{K \sim Q} \left[ R_\rho(K) \right] \le \mathbb{E}_{K \sim Q} \left[ R(K) + L(K)\rho \right].
\label{eq:proof_robust_pac}
\end{align}
To bound the nominal population risk $R(K)$, we invoke Theorem~\ref{thm:pacbayes_model_dependent_subgaussian_sqrt}
, since by assumption, the centered nominal loss is  $\sigma(K)$-sub-Gaussian under $\genDstro$. Hence,
with probability at least $1 - \delta$ over the sample $S$, it holds simultaneously for all $Q$  that
\begin{align}
\mathbb{E}_{K \sim Q} [R(K)] \le \mathbb{E}_{K \sim Q} [\widehat{R}_S(K)] + \cmplx(Q, P,\sigma).
\label{eq:proof_satndard_pac}
\end{align}
Substituting \eqref{eq:proof_satndard_pac} into \eqref{eq:proof_robust_pac} gives the result.
\end{proof}
Notice that the Wasserstein DROP risk is centered around the training (unknown) distribution.
The robustness penalty $L(K)\rho$ is now explicitly tied to the geometric sensitivity of each individual controller. 
Consequently, minimizing this bound requires an algorithm to actively optimize the empirical robust risk, 
favoring models with small (expected) Lipschitz constants that result in a better bound on 
generalization in unseen environments. 

%% file: text/sls_new.tex
\section{System Level Synthesis and Tractable Posterior Optimization}
\label{sec:sls}

We now specialize the abstract setup of Section~\ref{sec:general_control_sytems}
to finite-horizon linear time-invariant (LTI) systems. 
Controller $K\in\mathcal{K}$ is now a finite-horizon linear causal controller,
and the sample $Z\in\mathcal{Z}$ is a disturbance trajectory. 
Furthermore, we rely on the System Level Synthesis
(SLS)~\cite{wang2019_SLS} framework to obtain an explicit sub-Gaussian proxy and
the Wasserstein Lipschitz bounds in terms of the closed-loop maps from 
disturbances to state-control trajectories.

\subsection{Finite-horizon LTI specialization}
Consider a linear time-invariant (LTI) discrete-time dynamical system subject to additive disturbances
\begin{align}
    x_{k+1} &= A x_k + B u_k + w_k,  \quad k \in  \{0,\dots,T-1\},
    \label{eq:LTI-system}
\end{align}
where $x_k \in \mathbb{R}^{n_x}$ is the state, and $x_0$ is an initial state. 
Vector $u_k \in \mathbb{R}^{n_u}$ is the control input, and $w_k \in \mathbb{R}^{n_x}$ is an exogenous disturbance.
We consider finite-horizon linear causal controllers, which may be time-varying and 
depend on the state history up to the current time, i.e.
$$
u_k = \sum_{t=0}^k K_{k,t} x_t,  \quad k \in  \{0,\dots,T-1\}.
$$
Let $\mathcal{K}$ denote a class of finite-horizon linear causal controllers for system $(A,B)$,
where $A \in \Re^{n_x \times n_x}, B \in \Re^{nx \times n_u}$, 
and let $K \in \K$ be such a controller. Moreover,  let $\mathcal{W}$ 
denote the space of disturbance \emph{trajectories} over the horizon $T$, and $\bw \in \W$ 
 a particular disturbance trajectory.
Notice that, in this specialization, the data space $\Z$ becomes the disturbance-trajectory space $\W$.
The rollout loss function is now
$$\ell : \K \times \W \mapsto \Re_{+}.$$

\subsection{Finite-horizon lifted SLS responses}
Consider the discrete-time LTI system defined in~\eqref{eq:LTI-system}.
Define the stacked vectors
\begin{align*}
\mathbf{x}
&:=
\begin{bmatrix}
x_0^\top & x_1^\top & \cdots & x_T^\top
\end{bmatrix}^\top
\in \mathbb{R}^{(T+1)n_x},\\
\mathbf{u}
&:=
\begin{bmatrix}
u_0^\top & u_1^\top & \cdots & u_{T-1}^\top
\end{bmatrix}^\top
\in \mathbb{R}^{T n_u},\\
\mathbf{w}
&:=
\begin{bmatrix}
x_0^\top & w_0^\top & \cdots & w_{T-1}^\top
\end{bmatrix}^\top
\in \mathbb{R}^{(T+1)n_x}.
\end{align*}
Notice that we include the initial state in $\bw$ to ease the notation.
For a finite-horizon controller, the stacked state and input trajectories
are deterministic linear maps of the stacked disturbance vector:
\begin{equation}
\mathbf{x} = \Phi_x \mathbf{w}, \qquad \mathbf{u} = \Phi_u \mathbf{w},
\label{eq:sls_responses}
\end{equation}
where
$
\Phi_x \in \mathbb{R}^{(T+1)n_x \times (T+1)n_x},
\qquad
\Phi_u \in \mathbb{R}^{Tn_u \times (T+1)n_x}.
$
Both matrices $\Phi_x$ and $\Phi_u$ have a lower block-triangular structure 
due to the causality requirement.
We can write lifted responses in block-row form as
$$
\Phi_x =
\begin{bmatrix}
\Phi_x[0]\\
\Phi_x[1]\\
\vdots\\
\Phi_x[T]
\end{bmatrix},
\qquad
\Phi_u =
\begin{bmatrix}
\Phi_u[0]\\
\Phi_u[1]\\
\vdots\\
\Phi_u[T-1] 
\end{bmatrix},
$$
where
$
\Phi_x[k]\in\mathbb R^{n_x\times (T+1)n_x}, 
\qquad
\Phi_u[k]\in\mathbb R^{n_u\times (T+1)n_x}.
$

The finite-horizon achievability constraints are
\begin{equation}
\begin{split}
\Phi_x[0]
&=
\begin{bmatrix}
I_{n_x} & 0 & \cdots & 0
\end{bmatrix},
\\
\Phi_x[k+1]&=A\Phi_x[k]+B\Phi_u[k],
\,\, k \in  \{0,\dots,T-1\}
\label{eq:sls_achievability}
\end{split}
\end{equation}

These constraints define an affine feasible set over
\(\Phi \dfn (\Phi_x,\Phi_u)\), which can equivalently be written in compact
lifted form as
\begin{align*}
\mathcal{F}_{A,B}
=
\left\{ (\Phi_x,\Phi_u) : F_x \Phi_x + F_u \Phi_u = I \right\}.
\end{align*} where, $F_x$ and $F_u$ are given by
\begin{align*}
\overbrace{
\begin{bmatrix}
I      & 0      & 0      & \cdots & 0 \\
-A     & I      & 0      & \cdots & 0 \\
0      & -A     & I      & \ddots & \vdots \\
\vdots & \ddots & \ddots & \ddots & 0 \\
0      & \cdots & 0      & -A     & I
\end{bmatrix}}^{F_x},
\overbrace{
\begin{bmatrix}
0      & 0      & 0      & \cdots & 0 \\
-B     & 0      & 0      & \cdots & 0 \\
0      & -B     & 0      & \ddots & \vdots \\
\vdots & \ddots & \ddots & \ddots & 0 \\
0      & \cdots & 0      & -B     & 0
\end{bmatrix}}^{F_u}.
\end{align*} 
We refer the reader to~\cite{wang2019_SLS} for further details and derivations of these equalities.

\subsection{Subspace parameterization}
To obtain a finite-dimensional hypothesis space for PAC-Bayes, we
vectorize the lifted closed-loop response matrices. Let
\[
\operatorname{vec}(\Phi):=
\begin{bmatrix}
\operatorname{vec}(\Phi_x)\\
\operatorname{vec}(\Phi_u)
\end{bmatrix}.
\]
Since the finite-horizon SLS feasibility constraints are affine in
$(\Phi_x,\Phi_u)$, they can be written in vectorized form as
$
F\,\operatorname{vec}(\Phi)=b,
$
for a matrix $F$ and vector $b$ obtained from vectorizing the affine feasibility constraints. 
Hence, the feasible set is an affine
subspace in vectorized coordinates. Let $\Phi_0$ be any feasible baseline
lifted closed-loop response satisfying
$
F\,\operatorname{vec}(\Phi_0)=b,
$
and let $H$ be a matrix whose columns form a basis for the nullspace of $F$~\cite{golub2013matrix}.
Then every feasible lifted closed-loop response can be parameterized as
\[
\operatorname{vec}(\Phi)=\operatorname{vec}(\Phi_0)+H\theta,
\qquad \theta\in\Theta\subseteq\mathbb{R}^d.
\]
In this formulation, the learnable variable is $\theta \in \Theta$. Thus, the PAC-Bayesian hypothesis is now $\theta$ 
(as opposed to $K$ in previous sections), while the closed-loop response matrices become $\Phi_x(\theta)$ and $\Phi_u(\theta)$.
%

Furthermore, let $Q_c \in \mathbb{S}^{(T+1)n_x}_{+}$ and $R_c\in \mathbb{S}^{Tn_u}_{++}$ denote fixed performance weighting 
matrices, and define the stacked performance output
\begin{equation}
y(\theta) := \begin{bmatrix} Q_c^{1/2}  & 0 \\ 0 &  R_c^{1/2} \\ \end{bmatrix} \begin{bmatrix} \bx \\ \bu\end{bmatrix} = M(\theta) \bw,
\end{equation}
where
\begin{equation}
M(\theta):= \begin{bmatrix} Q_c^{1/2} \Phi_x(\theta) \\ R^{1/2}_c \Phi_u(\theta) \end{bmatrix}.
\label{eq:weighted_output}
\end{equation}

We will consider  $2$-norm type losses for simplicity of exposition, but
others, such as the $1$-norm, are readily incorporated. 
\begin{assumption} For any $\theta \in \Theta$ and $\bw \in \W$,
let the rollout loss be given by
\begin{equation}
\ell(\theta,\bw) := \|M(\theta)\bw\|.
\label{eq:l2_loss}
\end{equation}
\label{ass:l2_loss}
\end{assumption}
The empirical risk on a sample $S=\{\bw_i\}_{i=1}^n$ of i.i.d. trajectory-level 
disturbances is then
\begin{equation}
\empRisk_S(\theta)
:=
\frac{1}{n}\sum_{i=1}^n \|M(\theta)\bw_i\|,
\label{eq:empirical_risk_sls}
\end{equation}
and the population risk is 
\begin{equation}
R(\theta) := \E_{\bw \sim \genDstro} [ \ell(\theta, \bw)] =  \E_{\bw \sim \genDstro} \left[ \|M(\theta)\bw\| \right].
\label{eq:population_risk_sls}
\end{equation}
We now verify the conditions of Theorem~\ref{thm:PACBayes-W1} for two useful disturbance models: 
Gaussian disturbance trajectories and almost surely bounded ones.
Note once again that $\genDstro$ is  the disturbance-trajectory training distribution.

\subsection{Controller-dependent concentration and robustness certificates}
We start off with a  proposition on Gaussian trajectories.
\begin{proposition}[Gaussian trajectory disturbances]
\label{prop:gaussian_trajectory}
Assume
\begin{align*}
\bw \sim \mathcal{N}(\mu_w,\Sigma_w).
\end{align*}
For the loss  satisfying Assumption~\ref{ass:l2_loss}, the centered loss
\begin{align*}
\ell(\theta,\bw)-R(\theta)
\end{align*}
is $\sigma(\theta)$-sub-Gaussian with
\begin{align}
\sigma(\theta)
\dfn
\|M(\theta)\Sigma_w^{1/2}\|_{\mathrm{op}}.
 \label{eq:gaussian_sls_proxy}
\end{align}
\end{proposition}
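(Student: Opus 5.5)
The plan is to write the Gaussian trajectory as a Lipschitz image of a standard Gaussian and then apply Gaussian concentration for Lipschitz functions. Set $\bw = \mu_w + \Sigma_w^{1/2} g$ with $g \sim \gaussian(0, I)$ in $\Re^{(T+1)n_x}$. For fixed $\theta$, define
\begin{align*}
f_\theta(g) \dfn \|M(\theta)(\mu_w + \Sigma_w^{1/2} g)\|,
\end{align*}
so that $\ell(\theta,\bw) = f_\theta(g)$ and $R(\theta) = \E[f_\theta(g)]$. By the reverse triangle inequality,
\begin{align*}
|f_\theta(g) - f_\theta(g')| \le \|M(\theta)\Sigma_w^{1/2}(g-g')\| \le \|M(\theta)\Sigma_w^{1/2}\|_{\mathrm{op}}\,\|g-g'\|.
\end{align*}
Hence $f_\theta$ is Lipschitz with constant $\sigma(\theta)$ as in~\eqref{eq:gaussian_sls_proxy}. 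The mean $\mu_w$ only shifts the argument, so it does not affect this constant.

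The key step is the Gaussian concentration inequality for Lipschitz functions in its moment generating function form (Tsirelson--Ibragimov--Sudakov; see~\cite{boucheron2013concentration}, Thm.~5.5 and its proof via the Gaussian logarithmic Sobolev inequality and the Herbst argument). It states that for any $L$-Lipschitz $f$ and standard Gaussian $g$,
\begin{align*}
\log \E \exp\bigl(\lambda (f(g) - \E f(g))\bigr) \le \frac{\lambda^2 L^2}{2}, \qquad \lambda \in \Re.
\end{align*}
Applying this with $f = f_\theta$ and $L = \sigma(\theta)$ gives exactly Definition~\ref{def:sub-Gaussian} for the centered loss $\ell(\theta,\bw) - R(\theta)$. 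This is the form required in Theorem~\ref{thm:pacbayes_model_dependent_subgaussian_sqrt}.

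The main point to handle carefully is that we need this MGF version, not merely a tail bound. A tail-only bound would lose constants when converted to an MGF bound, and the claimed variance proxy has no extra constant. I would therefore cite the MGF statement directly; Herbst's argument indeed yields it. I would also check two routine details. First, $f_\theta$ is integrable, which holds because norms of Gaussians have all moments. Second, a degenerate $\Sigma_w$ poses no problem: then $\Sigma_w^{1/2}$ is singular, but the parametrization $\bw = \mu_w + \Sigma_w^{1/2}g$ still holds in distribution and the Lipschitz computation is unchanged.
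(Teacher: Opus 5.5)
Your proposal is correct and follows essentially the same route as the paper. Both write $\bw = \mu_w + \Sigma_w^{1/2}\xi$ and bound the Lipschitz constant of $\xi \mapsto \|M(\theta)(\mu_w + \Sigma_w^{1/2}\xi)\|$ by $\|M(\theta)\Sigma_w^{1/2}\|_{\mathrm{op}}$ via the reverse triangle inequality, then invoke the Gaussian concentration theorem (Boucheron et al., Thm.~5.5), which is stated in the moment generating function form you need. Your extra remarks on the MGF-versus-tail distinction, integrability, and a possibly singular $\Sigma_w$ are sound details that the paper leaves implicit.
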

\begin{proof}
    Loss is  $\ell(\theta,\bw) = \|M(\theta) \bw\|$ by Assumption~\ref{ass:l2_loss}, 
    with $\bw \sim \gaussian(\mu_w,\Sigma_w)$. 
    Reparametrize $\bw$ as
    $$\bw = \mu_w + \Sigma_w^{{1/2}}\xi$$ with $\xi \sim \gaussian(0,I)$.
    Define $\bar{\ell}_\theta(\xi) =\|M(\theta) (\mu_w +  \Sigma_w^{{1/2}}\xi) \|. $ 
    For any $\xi,\xi'$, it follows 
 $$|\bar{\ell}_\theta(\xi) - \bar{\ell}_\theta(\xi')|  
 \le\| M(\theta) \Sigma_w^{{1/2}}(\xi - \xi')\|$$ by the reverse triangle inequality.
 By using the definition of the operator norm, 
 we further have
 $
 \|M(\theta) \Sigma_w^{{1/2}}(\xi - \xi')\| \le \| M(\theta) \Sigma_w^{{1/2}} \|_\text{op} \|\xi - \xi'\|
 $
 which gives the Lipschitz bound $L(\theta) \dfn \| M(\theta) \Sigma_w^{{1/2}} \|_\text{op}$ for $\bar{\ell}_\theta$. 
 By the Gaussian concentration theorem~\cite[Theorem 5.5]{boucheron2013concentration},
 $\bar{\ell}_\theta - \E[\bar{\ell}_\theta]$ is $\sigma(\theta)$-sub-Gaussian, with $\sigma(\theta) = \| M(\theta) \Sigma_w^{{1/2}} \|_\text{op}$.
 Since $\ell(\theta, \bw) = \bar{\ell}_\theta(\xi)$, this completes the proof.
\end{proof}

\begin{proposition}[Bounded trajectory disturbances]
\label{prop:bounded_trajectory}
Assume that 
$$
\|\bw\| \le R 
$$
almost surely and the loss function satisfies Assumption~\ref{ass:l2_loss}.
Then the centered loss 
\begin{align*}
\ell(\theta,\bw)-R(\theta)
\end{align*}
is $\sigma(\theta)$-sub-Gaussian with
\begin{align}
\sigma(\theta)
\dfn
\frac{R}{2}\|M(\theta)\|_{\mathrm{op}}.
 \label{eq:bounded_sls_proxy}
\end{align}
\end{proposition}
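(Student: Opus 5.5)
The plan is to use Hoeffding's lemma: a random variable confined almost surely to an interval of length $b-a$ has a centered version that is sub-Gaussian with variance proxy $\tfrac14(b-a)^2$, as noted after Definition~\ref{def:sub-Gaussian}. So it suffices to show that, for fixed $\theta$, the loss $\ell(\theta,\bw)=\|M(\theta)\bw\|$ takes values almost surely in an interval of length at most $R\|M(\theta)\|_{\mathrm{op}}$.

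\textbf{Range of the loss.} Fix $\theta\in\Theta$. Nonnegativity of the norm and the definition of the operator norm give
\begin{align*}
0 \le \ell(\theta,\bw) = \|M(\theta)\bw\| \le \|M(\theta)\|_{\mathrm{op}}\,\|\bw\| \le R\,\|M(\theta)\|_{\mathrm{op}}
\end{align*}
almost surely. Hence $\ell(\theta,\bw)\in[a,b]$ with $a=0$ and $b=R\|M(\theta)\|_{\mathrm{op}}$. The loss is bounded, so $R(\theta)$ is finite and the centered loss $\ell(\theta,\bw)-R(\theta)$ is well defined and has zero mean.

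\textbf{Apply Hoeffding's lemma.} Hoeffding's lemma states that for any zero-mean $X$ with $X\in[a-\mu,b-\mu]$ almost surely,
\begin{align*}
\E\exp(\lambda X)\le \exp\!\left(\frac{\lambda^2(b-a)^2}{8}\right), \quad \lambda\in\R.
\end{align*}
This is exactly Definition~\ref{def:sub-Gaussian} with $\sigma^2=(b-a)^2/4$. Taking $X=\ell(\theta,\bw)-R(\theta)$ gives $\sigma(\theta)=\tfrac{b-a}{2}=\tfrac R2\|M(\theta)\|_{\mathrm{op}}$, which is~\eqref{eq:bounded_sls_proxy}.

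\textbf{Where care is needed.} The argument is essentially routine. The only point to watch is that the loss itself must lie in a bounded interval; a Lipschitz property alone would not suffice. This is why the lower endpoint $0$ comes from nonnegativity of the norm rather than from a reverse-triangle argument. A Lipschitz-type argument, using $|\ell(\theta,\bw)-\ell(\theta,\bw')|\le\|M(\theta)\|_{\mathrm{op}}\|\bw-\bw'\|$ on a ball of diameter $2R$, would yield only the worse constant $R\|M(\theta)\|_{\mathrm{op}}$. The direct range bound above avoids this loss of a factor of two.
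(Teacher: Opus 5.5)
Your proof is correct and follows the paper's argument: both bound the loss in $[0, R\|M(\theta)\|_{\mathrm{op}}]$ using nonnegativity and the operator-norm inequality, then apply Hoeffding's lemma to get the proxy $\frac{R}{2}\|M(\theta)\|_{\mathrm{op}}$. Your side remark is accurate but not needed: the direct range bound does beat the diameter-based Lipschitz argument by a factor of two.
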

\begin{proof}
For the operator norm and $R$ bounded norm of $\bw$, we have
$$
\|M(\theta)\bw\| \le \|M(\theta)\|_\text{op} \|\bw\| \le R\|M(\theta)\|_\text{op} 
$$
Since $l(\theta,\bw)\ge 0$ by construction, it holds that
$$
0 \le \ell(\theta,\bw) \le R\|M(\theta)\|_\text{op}.
$$
By Hoeffding's lemma~\cite{vershynin2018high} for bounded random variables, it follows that
the centered loss is $\sigma(\theta)$-sub-Gaussian, where
$$
\sigma(\theta)\dfn\frac{R}{2} \|M(\theta)\|_\text{op}.
$$
\end{proof}

\begin{proposition}[Wasserstein Lipschitz certificate]
\label{prop:sls_lipschitz}
Let the loss satisfy Assumption~\ref{ass:l2_loss}. Then it is $L(\theta)$-Lipschitz with a bound 
\begin{align}
L(\theta)
=
\|M(\theta)\|_{\mathrm{op}}.
\label{eq:lipschitz_sls} 
\end{align}
\end{proposition}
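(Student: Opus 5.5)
The plan is to verify Assumption~\ref{ass:lip_loss} directly for the loss of Assumption~\ref{ass:l2_loss}, with the Lipschitz constant $L(\theta)=\|M(\theta)\|_{\mathrm{op}}$. This mirrors the first half of the proof of Proposition~\ref{prop:gaussian_trajectory}, but without the Gaussian reparametrization: the argument runs in the original disturbance coordinates $\bw\in\W$. Those are the coordinates in which the Wasserstein ball~\eqref{eq:w1_ball} is defined, so this is the constant needed in Theorem~\ref{thm:PACBayes-W1}.

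Fix $\theta\in\Theta$ and take arbitrary $\bw,\bw'\in\W$. By the reverse triangle inequality for the Euclidean norm,
\begin{align*}
|\ell(\theta,\bw)-\ell(\theta,\bw')| = \bigl|\|M(\theta)\bw\|-\|M(\theta)\bw'\|\bigr| \le \|M(\theta)(\bw-\bw')\|.
\end{align*}
By the definition of the operator norm in the Notation paragraph, $\|M(\theta)v\|\le\|M(\theta)\|_{\mathrm{op}}\|v\|$ for every $v$. Applying this with $v=\bw-\bw'$ gives
\begin{align*}
|\ell(\theta,\bw)-\ell(\theta,\bw')|\le \|M(\theta)\|_{\mathrm{op}}\,\|\bw-\bw'\|,
\end{align*}
which is exactly the Lipschitz property in Assumption~\ref{ass:lip_loss} with $L(\theta)=\|M(\theta)\|_{\mathrm{op}}$.

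Finiteness of this constant is immediate. $M(\theta)$ in~\eqref{eq:weighted_output} is a finite matrix, and it depends affinely on $\theta$ through the subspace parameterization. It remains to check that the Euclidean norm on $\W$ used here matches the cost $\|z-z'\|_2$ in the Wasserstein definition, and it does.

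There is no substantial obstacle. The only subtlety worth a remark is that $\|M(\theta)\|_{\mathrm{op}}$ is in fact the smallest valid Lipschitz constant. Taking $\bw'=0$ and $\bw$ a top right singular vector of $M(\theta)$ attains equality, so the certificate is tight. This justifies writing the statement with ``$=$'' rather than only an upper bound.
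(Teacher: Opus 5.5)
Your argument is correct and is essentially the paper's proof: the reverse triangle inequality gives $|\ell(\theta,\bw)-\ell(\theta,\bw')|\le\|M(\theta)(\bw-\bw')\|$, and the operator-norm bound finishes it. Your tightness remark goes beyond what the paper claims and is not needed for the certificate; it holds as stated only when $\W$ contains $0$ and a top right singular direction of $M(\theta)$, e.g.\ $\W=\mathbb{R}^{(T+1)n_x}$.
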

\begin{proof}
It holds that $|\ell(\theta,\bw)  - \ell(\theta,\bw')|  \le \| M(\theta)(\bw - \bw')\| \le \|M(\theta)\|_\text{op}\|\bw - \bw'\|$.
The first inequality follows from the reverse triangle inequality, while the second one follows from the definition of the operator norm.
$ \|M(\theta)\|_\text{op}$ is a Lipschitz constant  by definition.
\end{proof}

The quantities \eqref{eq:gaussian_sls_proxy}, \eqref{eq:bounded_sls_proxy}, and \eqref{eq:lipschitz_sls} 
are controlled by the same weighted closed-loop map $M(\theta)$, 
which is one of the main advantages of the SLS formulation in the present framework.

\begin{remark}[Interpretation]
Theorem~2 requires only a valid controller/hypothesis  sub-Gaussian proxy for the centered loss. Propositions~1 and~2 show that, for the finite-horizon SLS loss
$
 \ell(\theta,w)=\|M(\theta)w\|,
$
this requirement is satisfied in two example trajectory-level disturbance models. In the Gaussian case, the proxy is covariance-aware and takes the form
\begin{align}
\sigma(\theta)=\|M(\theta)\Sigma_w^{1/2}\|_{\mathrm{op}},
\end{align}
where the covariance matrix $\Sigma_w$ may be full. Thus, the Gaussian model allows for a correlation across the stacked disturbance trajectory, including temporal correlations, cross-coordinate correlations, etc. 
In the bounded case, the proxy becomes
\begin{align}
\sigma(\theta)=\|M(\theta)(R/2)I\|_{\mathrm{op}}
=\frac{R}{2}\|M(\theta)\|_{\mathrm{op}}.
\end{align}

Hence, both certificates share the same template as they result from 
combining a concentration inequality with a Lipschitz bound for the disturbance to loss map.
The uncertainty enters through a scaling factor multiplying the closed-loop map. 
In the Gaussian case, this factor is $\Sigma_w^{1/2}$, while in the bounded case it is the isotropic scaling $(R/2)I$. 
\end{remark}


\subsection{Robust PAC-Bayes objective in SLS form}
Combining the PAC-Bayes square-root bound given by Theorem~\ref{thm:PACBayes-W1} with SLS 
hypothesis $\theta$ yields the final learning algorithm over the posterior $Q$.
Using the explicit proxies $\sigma(\theta)$ 
given by \eqref{eq:gaussian_sls_proxy} or \eqref{eq:bounded_sls_proxy}, 
we state the posterior optimization problem as 
\begin{equation}
\minimize_{Q \ll P} \,\,\mathbb{E}_{\theta\sim Q}\!\left[
\widehat R_S(\theta) + \rho \|M(\theta)\|_{\mathrm{op}}
\right]
 + \cmplx(Q,P,\sigma),
\label{eq:optimization_problem}
\end{equation}
with the complexity term $\cmplx(Q,P,\sigma)$ being equal to 
\begin{align}
\sqrt{
\frac{
2\,\mathbb{E}_{\theta\sim Q}\!\left[\sigma(\theta)^2\right]\,
\bigl(\mathrm{KL}(Q\|P)+\log(n/\delta)\bigr)
}{
n-1
}
}.
\end{align}
\begin{remark}[Computation] 
The optimization problem~\eqref{eq:optimization_problem} can be hard to solve in its general form. One common restriction is to
consider posteriors and priors that belong to the Gaussian family. Then, the KL term is available in closed form. 
For Gaussian distributions $\mathcal{N}(\mu,\Sigma)$, the empirical-risk and certificate terms can be estimated by
Monte Carlo methods using the reparameterization trick
$
\theta = \mu + L\varepsilon,
$
$
\varepsilon \sim \mathcal{N}(0,I),$
where $L$ is the square root of the covariance matrix. This yields a differentiable 
finite-dimensional optimization problem that is readily exploited by modern automatic differentiation (AD) methods. 
Otherwise, for the general Gibbs posterior, we can employ more advanced methods such as~\cite{welling2011bayesian,liu2016stein} similar 
to what was done in~\cite{BeroujeniPACSNOC}.
\end{remark}
Next, we follow up with a numerical example.

%% file: text/numerical_double_integrator.tex
\section{Numerical Example}
\label{sec:numerical}
In what follows, we use Julia~\cite{bezanson2017julia} to implement the numerical example and Zygote~\cite{Zygote.jl-2018} as the AD backend. 
The optimization problem is formulated in JuMP~\cite{DunningHuchetteLubin2017_JUMP}.

\subsection{Double integrator}
We consider a finite-horizon control problem for the discrete-time linear system~\eqref{eq:LTI-system}
with
\begin{align}
A = \begin{bmatrix} 1.0 & 0.1 \\ 0.0 & 1.0 \end{bmatrix},
\,
B = \begin{bmatrix} 0.0 \\ 1.0 \end{bmatrix} 
\end{align}
The state and input penalties are
$$
\widehat{Q}_c = \operatorname{diag}(1.0,\,0.1),
\qquad
\widehat{R}_c = \operatorname{diag}(0.01),
$$
and $Q_c,R_c$ are constructed as  
\begin{align*}
Q_c := \mathrm{blkdiag}(\underbrace{\widehat{Q}_c,\dots,\widehat{Q}_c}_{T+1}),
\,\,
R_c := \mathrm{blkdiag}(\underbrace{\widehat{R}_c,\dots,\widehat{R}_c}_{T}).
\end{align*}
for the control horizon $T=10$.
For the stacked trajectory, the loss is the weighted Euclidean norm
$$
\ell(\bx,\bu)
= \left(\sum_{t=0}^{T} x_t^\top \widehat{Q}_c  x_t + \sum_{t=0}^{T-1} u_t^\top \widehat{R}_c u_t \right)^{1/2}.
$$
In the SLS specialization, this trajectory cost is precisely $\ell(\theta,\bw) = \|M(\theta) \bw\|$.
We construct the corresponding finite-horizon SLS parameterization $\Phi = \Phi_0 + H\theta$ and optimize an isotropic Gaussian posterior
$$
Q= \mathcal{N}(\mu, \sigma^2 I)
$$
over the SLS coordinates $\theta$. The training set consists of $n$ i.i.d. disturbance trajectories drawn from a zero-mean Gaussian with covariance
$$
\Sigma_w = (0.02)^2 I, 
$$
which, in this example, corresponds to disturbance vectors of dimension $N_w = (T+1)n_x = 22$. The posterior is fitted with the  solver using max $150$ L-BFGS iterations, at a  confidence level $\delta=0.05$, a prior standard deviation $\sigma_{\mathrm{prior}}=1.0$, and $24$ Monte Carlo samples to estimate posterior expectations. 
We report the bound decomposition on the training set and the comparison of the vanilla 
PAC-Bayes and the distributionally robust version developed in this work. 
The results are depicted in Figures~\ref{fig:shift_baseline_cost distribution} and~\ref{fig:robust_vs_nonrobust_pac_bayes}.
 \begin{figure}
    \includegraphics[scale=0.36]{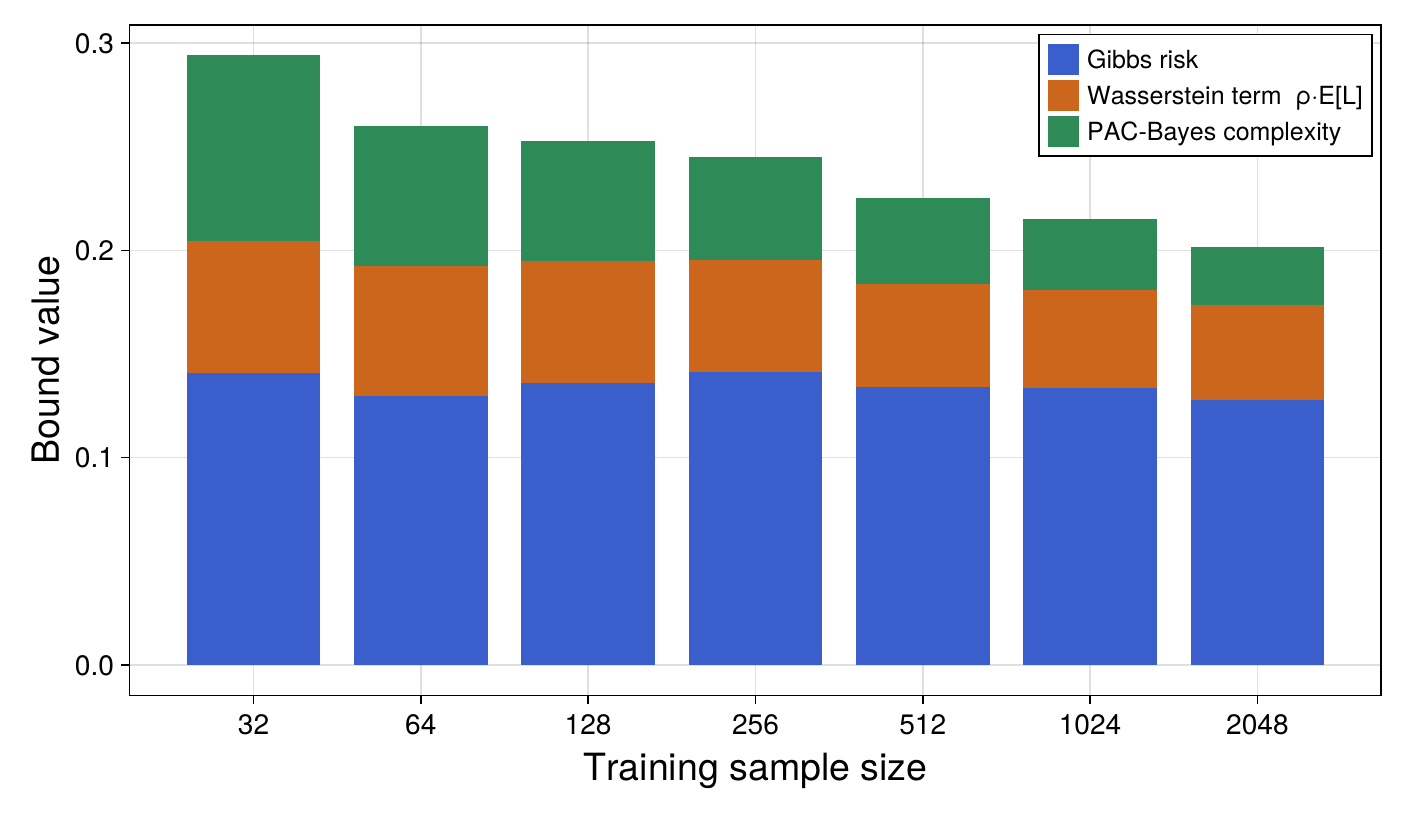}
    \caption{Decomposition of the  robust PAC-Bayes certificate cost for various training sample sizes. 
    Gibbs risk is defined as empirical cost averaged over the optimized posterior $Q$. As the PAC-Bayes complexity depends on the controller performance, the resulting cost is self-balancing without the need to resort to tricks such as saturation.}
        \label{fig:shift_baseline_cost distribution}
\end{figure}

\begin{figure} 
    \includegraphics[scale=0.34]{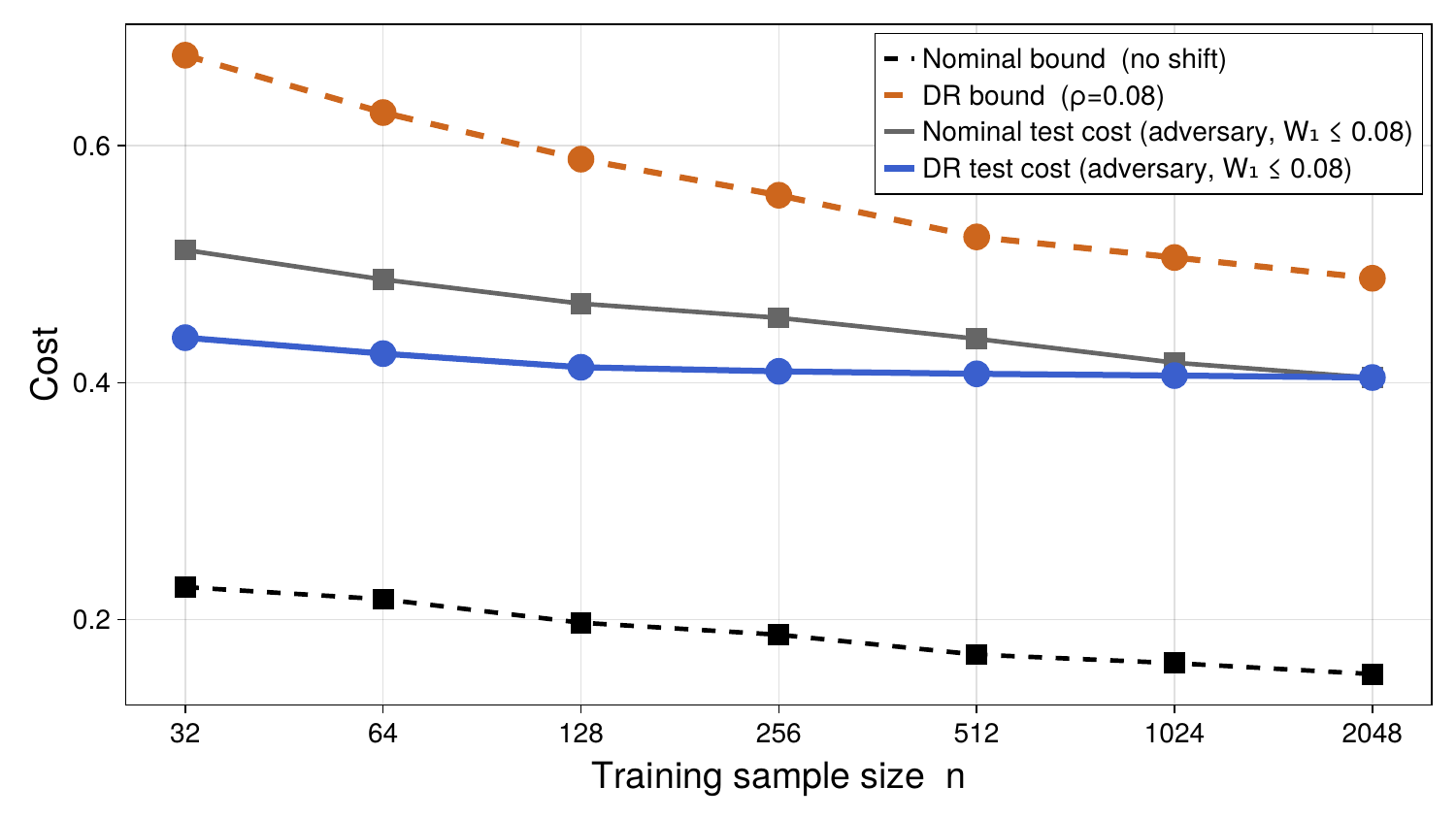}
    \caption{Comparison of vanilla PAC-Bayes ($\rho = 0$) vs robust PAC-Bayes ($\rho = 0.08$). The vanilla PAC-Bayes bound (black dashed) does not properly upper-bound the test risk (gray solid)  due to unmodeled distribution shift, while the robust bound (orange dashed) provides a valid bound on the test risk (blue solid).  We can also observe that DRO PAC-Bayes results in a controller that has lower empirical cost in addition to correct theoretical guarantee. Both empirical test costs are validated against the same deployment distribution shift, suggesting the twofold benefit of our DRO PAC-Bayes approach. 
    We empirically observe the same behavior over different $\rho$ values. We report mean values for test cost.
    } 
    \label{fig:robust_vs_nonrobust_pac_bayes}
\end{figure}

 Looking at Figure~\ref{fig:shift_baseline_cost distribution}, we can see that the PAC-Bayes complexity term diminishes with more data, as expected. However, the optimization procedure still has to balance the contribution from the Wasserstein penalty and the empirical risk, 
 both averaged over the same posterior. 

 Figure~\ref{fig:robust_vs_nonrobust_pac_bayes} shows the effectiveness of our method in the presence of distribution shifts. We can observe that vanilla PAC-Bayes, which stands for the PAC-Bayes term without the Wasserstein part, cannot account for environmental shifts properly. It can clearly be seen from the figure that the reported bound is violated for all sizes of the dataset $n$. On the other hand,
 our robustified method provides a correct upper-bound on the realized empirical risk. We also remark that our method
 performs better on distribution shifted test data than the vanilla one. Both methods have been tested on the same adversarial distribution shift.
 Note that the distribution shift was selected such that it stays inside the certified radius while moving the mean of the training data distribution. 